\newenvironment{defitems} {
    \begin{list}{$-$}{%
        \setlength{\leftmargin}{16pt}
        \setlength{\topsep}{-4pt}
        \setlength{\partopsep}{0pt}
        \setlength{\itemsep}{2pt}
        \setlength{\itemindent}{-6pt}}
        \ignorespaces}
{\unskip\end{list}}
\providecommand{\mdp}{\textsc{mdp}\xspace}
\providecommand{\mdps}{\textsc{mdp}s\xspace}
\providecommand{\pomdp}{\textsc{pomdp}\xspace}
\providecommand{\pomdps}{\textsc{pomdp}s\xspace}
\providecommand{\psomdp}{\textsc{pso-mdp}\xspace}
\providecommand{\psomdps}{\textsc{pso-mdp}s\xspace}
\providecommand{\expectation}[1]{\ensuremath{\mathbb{E}\left(#1\right)}}
\providecommand{\instance}[1]{\ensuremath{\mathscr{#1}}}
\providecommand{\gobble}[1]{}
\providecommand{\checkin}{\ensuremath{\kappa}\xspace}
\providecommand{\comp}{\ensuremath{\textrm{cmp}}}
\providecommand{\opt}{\ensuremath{\ast}}
\providecommand{\nop}{\textsc{no-op}\xspace}
\providecommand{\anop}{\ensuremath{a_{\bot}}}
\providecommand{\action}{\ensuremath{\vec a}}
\providecommand{\actions}{\textsc{A}}
\providecommand{\actionsuffix}{{\vec{u}}}
\providecommand{\state}{\ensuremath{s}}
\providecommand{\states}{\textsc{S}}
\providecommand{\algoactions}[1]{\textsf{\small A}({#1})}
\newtheorem{theorem}{\textbf {Theorem}}
\newtheorem{lemma}[theorem]{\textbf {Lemma}}
\newtheorem{definition}{\textbf{Definition}}
\newtheorem*{notation}{\textbf{Notation}}
\providecommand{\Reals}{\ensuremath{\mathbb{R}}}
\providecommand{\Nats}{\ensuremath{\mathbb{N}_{>0}}}
\providecommand{\expectation}[1]{\ensuremath{\mathbf{E}\left(#1\right)}}
\newif\ifmargincomments %
\newif\ifrelaxedv  %
\newcommand{\jvspace}[1]{}
\newcommand{\jvspace}[1]{\vspace{#1}}
\title{\huge \bf
Planning under periodic observations: bounds and bounding-based solutions 
\vspace*{-0.9ex}
}
\author{Federico Rossi$^{1}$ and Dylan A. Shell$^{2}$%
\thanks{$^{1}$Jet Propulsion Laboratory, California Institute of Technology, Pasadena, CA 91109, USA.
        {\tt\footnotesize federico.rossi@jpl.nasa.gov}}%
\thanks{$^{2}$Dept. of Comp. Sci. \& Eng., Texas A\&M University, College Station, TX 77843, USA.
        {\tt\footnotesize dshell@tamu.edu}}%
}
\begin{document}

\maketitle
\thispagestyle{empty}
\pagestyle{empty}

\begin{abstract}

We study planning problems faced by robots operating in uncertain environments
with 
incomplete knowledge of state, and
actions that are noisy and/or imprecise.
This paper identifies a new problem sub-class that models
settings in which information is revealed only intermittently through
some exogenous process that provides state information periodically.
Several practical domains fit this model, including the specific scenario that motivates our research:  
autonomous navigation of a planetary exploration rover augmented by remote imaging. 
With an eye to efficient specialized solution methods, we examine the structure of instances of this sub-class. %
They lead to Markov Decision Processes with exponentially large action-spaces but for which,
as those actions comprise sequences of more atomic elements,
one may establish 
performance
bounds by comparing policies under different information assumptions.
This provides a way in which to construct performance bounds systematically.
Such bounds are useful because, in conjunction with the insights they confer, they can be employed in bounding-based methods to obtain high-quality solutions efficiently; the empirical results we present demonstrate their effectiveness for the considered problems.
The foregoing has also alluded to the distinctive role that time plays for these problems\,---more specifically: time until information is revealed---\,and we uncover and discuss several interesting subtleties in this regard.
\end{abstract}

\section{Introduction}

Autonomous robots are compelled to cope with uncertainty.  The inherent
imperfections of sensing and actuation, as well as the inevitable shortfalls of
world models, mean that robots must select actions 
despite having only imprecise state information.  Unfortunately,
as is well known, the problem of planning under uncertainty in full generality
remains out of practical reach---except in problem instances that are tiny or where planning horizons are short.  In light of
this predicament, this paper represents a campaign of attack focused on
specialization: it aims at uncovering opportunities for development of
efficient methods that produce high quality solutions, even if only
for a restricted sub-class of planning problems. So long as the sub-class includes problems of practical 
value, such methods will have obvious utility.
As motivation, we begin with a specific instance of signal interest to us.

\begin{figure}
  \vspace*{-1ex}
  \begin{center}
    \includegraphics[scale=0.425]{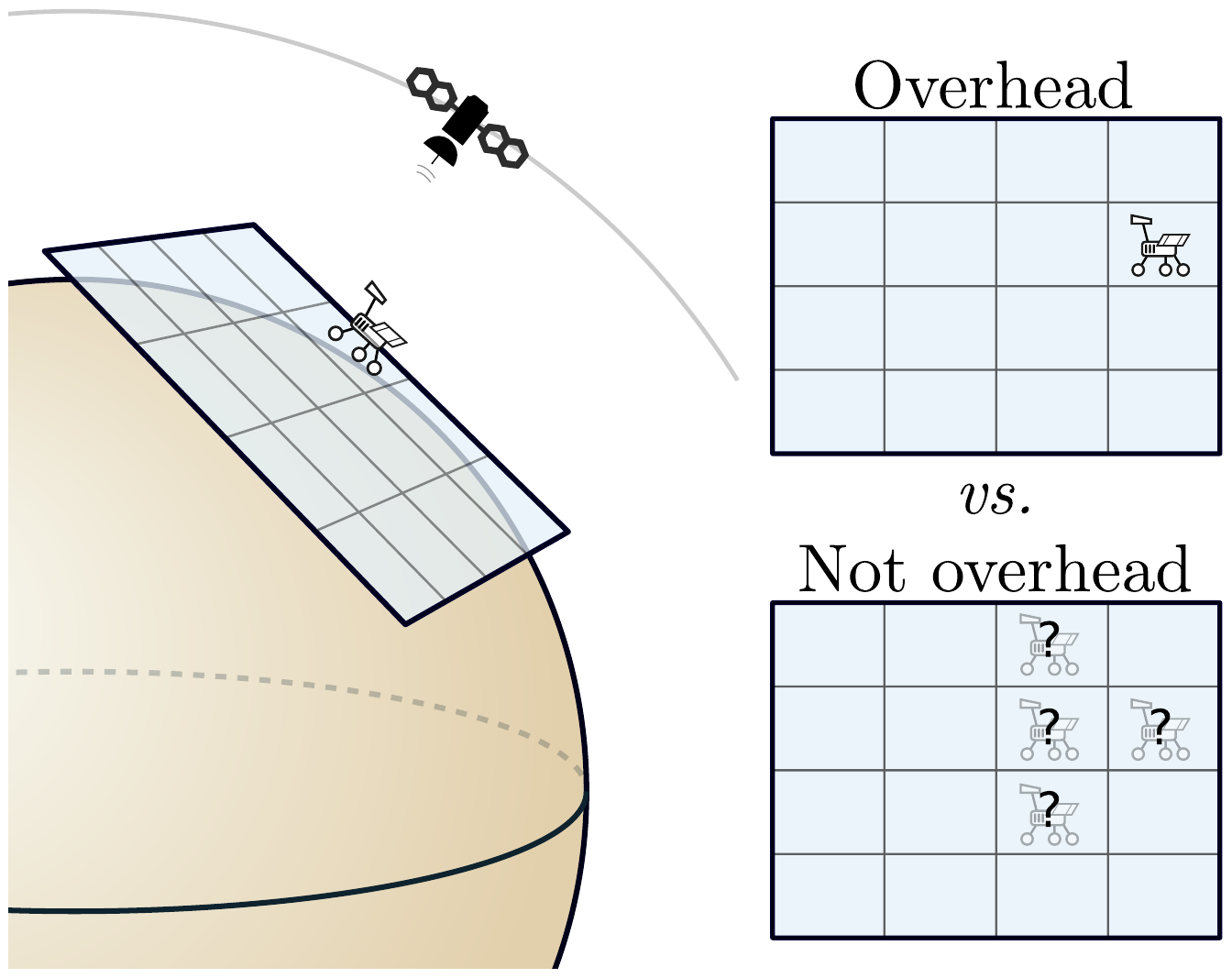}
  \vspace*{-1ex}
  \end{center}
  \caption{Motivating example: a satellite orbiting a planetary body helps 
localize an autonomous rover that is tasked with operating on the body's surface. The
rover executes a sequence of actions, but only obtains its state (shown as a
definitive location within a specific cell) when the satellite is overhead.
The rover must plan and act despite receiving observations that only arrive
periodically.
  }
  \vspace*{-0.2ex}
  \label{fig:roversatellite}
\end{figure}

Consider the autonomous rover in Figure~\ref{fig:roversatellite} that is navigating across the surface of some remote asteroid, moon, or planetoid.
Its objective is to reach a goal region efficiently in order to collect samples at that location for detailed analysis later.  
Even if it departs from a known position, the rover's knowledge of its pose rapidly becomes unreliable unless sensors can help circumscribe probable locations.
Suppose that, along with the rover,
 a separate orbital device had also been deployed.  
This satellite carries surface-directed sensors that include a detector capable of picking up the rover.  
From its extrinsic perspective, as the satellite circles, it acquires  information (e.g., imagery and ranging data) providing the rover's position.  
When the two are in communication range, there is the possibility of a \emph{check-in} to provide the rover with its  location.

In this scenario, the rover's knowledge of its state is sporadic: the
data providing its pose are sparse, though regular, and
when the check-ins do occur they resolve the rover's position. 
From the point of view of
the rover, the process that generates observations is exogenous.  The process's
periodicity is known, which means that, even though the rover may not know the  
information it will receive (since it does not know where it is, precisely), it
can be certain \emph{when} the data will be received.  

The traits present in the rover example\,---\emph{viz.} infrequent but periodic
observations of state---\,form a special sub-class of partially-observable
planning problems. 
These same properties also appear in other robotic domains.
For instance, marine robots operating in tidal regions may find their sensors
inhibited by periodic phenomena (e.g., those driven by diurnal factors).
Quite different instances arise when, to reduce the energy expended on radio
transmission, a team of multiple robots employs a pre-determined
synchronization and communication schedule. 
In fact, intermittency can have multiple advantages, such as in
facilitating stealthy operation, desirable for 
robot operation in clandestine conditions.

One significant source of complexity in dealing with general
partially-observable problems is that they involve balancing information
gathering with reward-realizing activities.  Being planning problems, there are
(state- or belief-mediated) correlations across time which complicate the
process of choosing actions.  Any state-revealing observation process
that is exogenous weakens what otherwise would be a tightly-coupled causal cycle.  In problems
like that in Figure~\ref{fig:roversatellite}, the robot's actions affect
neither when it will receive an observation nor the quality  of its estimate
when it does; none of its actions can be said to gather more information than
any other.  But the cycle is not entirely severed either, because the actions do
affect what is observed, i.e., what state the robot finds itself in when a check-in occurs.
Critically, when the observation process is periodic, the robot may select
actions\,---knowing when observations will arrive---\,so that it is at some
juncture where what is discerned will be of most value.

\subsection{Contribution and Organization}
Our contribution is threefold.
First, in Section \ref{sec:definitions}, we formally define the novel problem of decision-making under uncertainty with periodic information check-ins  
 as a stochastic decision problem with the usual assumption of an underlying
Markovian process. 
Second, in Sections~\ref{sec:ub} and~\ref{sec:lb}, we derive upper and lower bounds for the state-action values of the problem that can be computed efficiently. 
Along the way, Section~\ref{sec:ub:check-in-counter-example} presents an important example
illustrating that more frequent observations are not always better.
Third, in Section~\ref{sec:bnb} we propose a branch-and-bound method that makes use of
bounds to compute exact solutions to the PSO-MDP;
Section~\ref{sec:num-ex} presents numerical results that testify
to the effectiveness of this algorithm, showing that it is significantly faster compared to a naive MDP approach. 
The final section, Section~\ref{sec:concl},
presents our conclusions. %

\subsection{Related work}

The framework of Markov Decision Processes (\mdps) is a useful basis for optimal
control, planning, and learning in
robots~\cite{bertsekas19reinforcement,sutton18reinforcement}.
Classic fully-observable problems have a rich history with a variety of
effective solution techniques~\cite{bellman1984history}; recent work has sought
extensions to the basic \mdp formulation to capture additional features
including time-varying models~\cite{boyan2000exact} and more complex
representations~\cite{diuk2008object}, as well as exploring various means to
improve performance, especially in solving very large
instances~\cite{swiechowski21}.  
The settings we will consider are not fully-observable (except in the degenerate case with unit period),
and it so might be better considered partially observable.

The general framework of Partially-Observable Markov Decision Processes
(\pomdps) has been explored as a solution to robotics problems for the last two
decades~\cite{kurniawati22pomdp}; %
early attempts to apply the techniques of
the day ended up highlighting the twin curses of history and
dimensionality~\cite{pineau2003point} as obstructions to the
tractable solution of \pomdps.  A subsequent and popular line of work then
pursued policy-based approaches~\cite{meuleau2013solving}.  More recent work has
employed on-line sampling-based methods  to great effect, most notably
\cite{silver2010monte} and \cite{somani2013despot}, along with their descendants.
These methods explore only those belief states that can be reached
from the circumstances actually facing the robot, which helps increase the
scale of problems that can be effectively attacked. 

Still other techniques improve scalability further by treating what might be
termed ``intermediate'' formulations, imposing constraints derived from other
insights.  In terms of observations: for instance, the locally observable \mdps
of~\cite{merlin2020locally} consider observations derived from the
readings of realistic sensors. There, when something is sensed, it is sensed
well; when it is not observed, no data are obtained. In some ways this is akin to the
periodic observations we treat, as the challenge is sparseness rather than degradation or
corruption through random noise.  In terms of actions: the
options/macro-actions framework~\cite{sutton1999between} considers aggregate
actions, permitting a notion of hierarchical solution. They treat policies as
macro-actions but, for periodic observations, no observations occur between
the atomic actions, so we consider just simple sequences.  
Finally, 
unlike semi-Markov processes,
no different mathematical machinery will
be needed
for problems with periodic observations, 
other than some few complexities raised with regard to
discounting.

\vspace*{-0.2em}
\section{Preliminaries and basic definitions}
\label{sec:definitions}

We formally define the periodically state-observed Markov Decision Process (\psomdp) as follows.

\begin{definition}[\psomdp]
A \emph{periodically state-observed Markov Decision Process}
is a 5-tuple $\langle S, A, T, R, \checkin  \rangle$ where
\begin{defitems}
\item $S = \{s_0, s_1, \dots, s_{|S|}\}$ is the finite set of states; 
\item $A = \{a_0, a_1, \dots, a_{|A|}\}$ is the finite set of actions; 
\item $T: S\times A \times S \to [0,1]$ is the transition dynamics, or
transition model, describing the stochastic state transitions of the system,
assumed to be Markovian in the states, where $\forall t, P(s^{t+1} = s' | s^{t}
= s, a^{t} = a) = T(s',a,s)$;
\item $R: S\times A\to \Reals$ is the \gobble{reward }function which prescribes
that reward $R(s,a)$ is obtained for taking action $a$ in state~$s$;
\item $\checkin \in \Nats$ is the check-in period.
\end{defitems}
\end{definition}

The optimization objective is to maximize the expected discounted cumulative reward
\begin{equation}
U(\state^0)=\expectation{\sum_{t=0}^{\infty} \gamma^{\,t} R\left(s^t,a^t\right)}
\label{eqn:psomdp-discount}
\end{equation}
via selection of actions $a^{0}, a^{1}, a^{2}, \dots$.

The key difference with respect to standard \mdps is that, when an agent's planning
problem is modeled via a \psomdp, it must take actions at every time, but with
the current state being disclosed only every $\checkin$ steps: $t\in \{0, \checkin, 2\checkin, 3\checkin, \cdots, \}$.
Between check-ins, the agent cannot directly observe its own state, and it must maintain a belief over its state and plan based on this belief. 
Following standard notation, in what follows we write $U^{\opt}(\cdot)$ for the value function that gives the maximal expected discounted cumulative reward at each state.

The \psomdp problem can be cast both as a Markov Decision Process with composite (or macro) actions, and as a \pomdp with many uninformative observations.

\subsection{Equivalent \mdp formulation}

To rigorously define a solution concept for an \psomdp (i.e., to show the concept of a policy is appropriate), we
first need two definitions, which we shall re-use later too.

\begin{definition}[transition composition]
\label{defn:t-composition}
For some $\checkin \in \Nats$ and transitions $T: S\times A \times S \to [0,1]$ the
\emph{$\checkin$-composed transition model} is the function $T^{\checkin}: S\times A^\checkin \times S \to [0,1]$ defined as

\vspace*{-1ex}
{\small
\begin{equation}
T^{\checkin}\left(s',(a_0, \dots, a_{\checkin-1}), s\right) = \!\!\!\!\!\!\!\!\!\sum_{\substack{(s_0, \dots, s_{\checkin-1})\in S^\checkin\\ \text{where } s_0 = s\\ \text{ and } s_{\checkin-1} = s'}} \prod_{i=0}^{\checkin-1} T(s_{i+1},a_i,s_i).
\label{eq:def:composite-transitions}
\end{equation}
}
\end{definition}

The transition model $T$ describes the distribution of states reached after a
single step, conditioned on a single action being issued. By unfurling copies of $T$, the
$\checkin$-composed version, $T^\checkin$, describes the distribution of states reached after
$\checkin$~steps, now conditioned on a sequence of $\checkin$~actions; we  
will write $\action$ for such sequences.  By definition, $T = T^1$.

\begin{definition}[Reward Composition]
\label{defn:rew-compositions}
For some $\checkin \in \Nats$, discounting factor $\gamma \in [0,1)$, and reward
function $R: S\times A\to \Reals$ the \emph{$\checkin$-composed $\gamma$-discounted reward} is
the function $R^{\checkin,\gamma}: S\times A^\checkin \to\Reals$ defined as

\vspace*{-2ex}
{\small
\begin{align}
&R^{\checkin,\gamma}\left(s,\action \right) = R^{\checkin,\gamma}\left(s,(a_0, a_1, \dots a_{\checkin-1})\right) = \nonumber \\[-4pt]
&\hspace*{8ex}\sum_{d = 0}^{\checkin-1} \gamma^{\;d}\!\!\left( \sum_{\substack{(s_0, \dots, s_{d})\in S^{d+1}\\ \text{where } s_0 = s}}  R(s_{d}, a_{d}) \prod_{i=0}^{d-1} T(s_{i+1},a_i,s_i)\right).
\label{eq:def:composite-rewards}
\end{align}
}
\end{definition}

The $\checkin$-composed version of the reward function is analogous to the transition composition, but with the
additional complexity that the discount is incorporated as one runs along the
length of the sequence. Note that, by definition $R = R^{1,\gamma}$. In circumstances,
like this one here, where $\gamma$ plays no role it will be elided and we will write $R^{1}$
only.

The \checkin-composed transition model \eqref{eq:def:composite-transitions} and the \checkin-composed rewards \eqref{eq:def:composite-rewards} can be computed recursively; one can show that the resulting computation time grows exponentially with the check-in period \checkin as $O((|\states|\cdot|\actions|)^\checkin)$.

We are now in a position to define an \mdp equivalent to any \psomdp:

\begin{definition}[Composite Action Process]
\label{defn:composite-action}
Given \psomdp $\instance{M} = \langle S, A, T, R, \checkin  \rangle$,
its associated \emph{composite action decision process} is 
$\instance{M}_{\comp} = \langle S, A^\checkin, T^\checkin, R^{\checkin,\gamma}, 1  \rangle$.
\end{definition}

The composite action process is an \mdp 
because the Markov property is preserved when state sequences are gathered together,
indicating that it has
solution in the form of a mapping from states to $\checkin$-length sequences of
actions, \emph{viz.} a policy.  By ``solution'' here, we mean actions that yield an optimal cumulative
reward in expectation over the stochastic transition dynamics.
Since $\instance{M}$ and $\instance{M}_{\comp}$ are really identical problems
on the same Markov process, every \psomdp has a solution in the
form of a policy. The optimal state-action values, or Q-values, can be computed as

\vspace*{-2ex}
\begin{equation}
Q^\opt(\state,\action) = R^{\checkin,\gamma}(\state,\action) + \gamma^\checkin \sum_{\state'\in\states} T^\checkin(\state',\action,\state) \max_{\action'\in\actions^\checkin} Q^\opt(\state',\action')
\label{eq:mdp:qvalue},
\end{equation}
and the corresponding optimal policy can be computed as
\begin{equation}
\pi^\opt(\state) = \arg\max_{\action\in\actions^\checkin} Q^\opt(\state,\action).
\label{eq:mdp:policy}
\end{equation}

\subsection{Equivalent \pomdp formulation}
The \psomdp can also be cast as a \pomdp where the state $S_\pomdp = \{s_0, s_1, \dots, s_{|S|}\}\times\{[0,\dots, \checkin-1\}$ captures the \psomdp state and the time until the next check-in; the actions, transitions, and rewards are identical to the \psomdp's (with two minor exceptions: the transitions also update the time until the next check-in, and the rewards ignore the temporal portion of the state); and the observation function $O_\pomdp$ returns the current state at times corresponding to check-ins, and is uninformative otherwise.

\subsection{Discussion}

Note how, in the preceding, the \pomdp treatment is a poor fit for
the \psomdp sub-class of problems. We are required to inflate the state space
to account for the check-in period because the observations depend on the time
since the last check-in, but 
must be conditioned on state.
Also, the expressive freedom which the \pomdp does provide, a distribution in
$O_\pomdp$, can't be turned to advantage.

The composite action \mdp suffers from problems too. Its action space is exponential in the size of the \psomdp's; indeed, as \checkin grows, the possibility of obtaining any solution in this form looks increasingly implausible.  
Part of the problem is that standard 
\mdp solution 
techniques treat the action set as an
opaque collection. The fact that these particular actions are sequences of more atomic
actions suggests that it could be useful to consider interrelationships
between solutions with differing actions. 
This motivates the search for upper and lower bounds, which follows next.
However, to do so we find that it aids the intuition to adopt an
information-oriented interpretation.

\section{Upper Bounds}
\label{sec:ub}
In this section, we explore how additional information check-ins can provide
upper bounds on the value of \psomdp problems. First, we introduce an auxiliary
definition.

\begin{notation}[Action sequence subset]
Consider an action sequence $\action=(a_0,a_1,\ldots, a_{\checkin-1})$ of
length $\checkin$. We denote as $\action_{\ell:m}$ the subsequence $(a_\ell,
a_{\ell+1}, \ldots, a_{m-1})$ of length $m-\ell$. %
\end{notation}

\subsection{Bonus and extra check-ins}
\label{sec:ub:bonus}

We start by assessing the value of receiving supererogatory 
check-ins in addition to the periodic check-ins that occur with period \checkin. We distinguish two situations: \emph{announced extra check-ins}, when the availability of a future additional check-in is known in advance, and \emph{unannounced bonus check-ins}, where the occurrence of the check-in is not anticipated.

\begin{definition}[Unannounced bonus check-in]
Consider an agent following an optimal \psomdp policy. Suppose that $\tau$ time
steps after the last check-in (with $\tau<\checkin$), the agent receives an
unanticipated
check-in, which reveals its state; the agent can use this
newly-disclosed bonus information to optimize the expected discounted reward.
\end{definition}

\begin{definition}[Announced extra check-in]
Suppose an agent is following an optimal \psomdp policy. At the time of a
check-in, the agent is informed that it will receive an extra check-in after
$\tau<\checkin$ time steps, \emph{in addition} to the regularly scheduled
check-ins. The agent can use this newly-disclosed information to optimize the
expected discounted reward.
\end{definition}

The next two lemmas show that, perhaps unsurprisingly, both unannounced and announced check-ins do not decrease the expected reward, and announced check-ins never result in a lower reward compared to unannounced ones.

\begin{lemma}[Unannounced bonus check-ins bound \psomdp values from above]
\label{lemma:bonus:unannounced}
Consider a \psomdp with an unannounced bonus check-in $\tau$ time steps after a regular check-in. The optimal policy that uses the information provided by the unannounced bonus check-in has an expected discounted reward no lower than the original \psomdp policy.
\end{lemma}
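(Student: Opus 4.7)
The plan is to establish this via the classical ``more information cannot hurt'' argument: exhibit a feasible policy in the bonus-check-in setting whose expected discounted return equals that of the original \psomdp optimal policy, and then invoke optimality of the bonus policy to conclude. The essential move is to show that ignoring the bonus check-in is always an option, so the optimum over a strictly larger set of conditioning opportunities can only improve.

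First, I would fix notation. Let $\pi^\opt$ denote the optimal \psomdp policy, which at a regular check-in observing state $\state$ commits to the sequence $\action^\opt(\state) = \arg\max_{\action \in \actions^\checkin} Q^\opt(\state,\action)$ via \eqref{eq:mdp:qvalue}--\eqref{eq:mdp:policy}, yielding expected discounted reward $U^\opt(\state)$. Consider now the modified process in which, $\tau < \checkin$ steps after the last regular check-in (and unbeknownst to the agent when it committed to $\action^\opt(\state)$), the current state $\state^\tau$ is revealed. Let $\tilde\pi$ denote \emph{any} policy allowed to exploit this revelation, and write $\tilde U(\state)$ for the value achieved by an optimal such $\tilde\pi$.

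Next, construct the specific ``bonus-ignoring'' policy $\tilde\pi_0$: at the regular check-in in state $\state$, it commits to $\action^\opt(\state)$; upon observing $\state^\tau$ at the bonus check-in, it discards that information and continues executing the suffix $\action^\opt(\state)_{\tau:\checkin}$; thereafter it follows $\pi^\opt$. Because $\tilde\pi_0$ applies exactly the same $\checkin$-length action sequences as $\pi^\opt$ at every regular check-in and is conditionally independent of $\state^\tau$ given $\state$, the induced probability distribution over state--action trajectories, and therefore over per-step rewards, coincides with that of $\pi^\opt$. Consequently the expected discounted reward of $\tilde\pi_0$ equals $U^\opt(\state)$.

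Finally, since $\tilde\pi_0$ is a feasible policy in the bonus-check-in setting, optimality of $\tilde\pi$ gives
\begin{equation}
\tilde U(\state) \;\geq\; \expectation{\sum_{t=0}^{\infty} \gamma^{\,t} R(s^t,a^t) \,\big|\, \tilde\pi_0,\, s^0=\state} \;=\; U^\opt(\state),
\end{equation}
which is the claimed inequality. I do not anticipate a genuine obstacle: the only subtlety is being careful that the bonus-ignoring policy genuinely induces the same joint distribution on trajectories as $\pi^\opt$ (so that discounted rewards match term-by-term in expectation), which follows because the mapping from check-in state to committed action sequence and the underlying transition model $T$ are both unchanged, and the extra observation at time $\tau$ is simply discarded.
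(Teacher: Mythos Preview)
Your proof is correct and follows essentially the same approach as the paper: both arguments show that the action choice made by the original \psomdp policy remains feasible in the bonus-check-in setting, so the optimum over the enlarged set of options can only be at least as good. The paper phrases this concretely by writing the value at the bonus check-in as a maximization over action suffixes and observing that the pre-committed suffix $\hat\action_{\tau:\checkin}$ lies in the feasible set, whereas you phrase the same idea at the level of whole policies and trajectory distributions; the content is identical.
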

\begin{proof}
The optimal reward for an agent in state \state, $\tau$ time steps after the last check-in, when the bonus check-in occurs, can be computed as

\vspace*{-3ex}
{\small
\begin{align}
U^{\opt}_{U,[\tau]}(s)=&\max_{\action_{\tau:\checkin}\in\actions^{\checkin-\tau}} \Bigg( R^{\checkin-\tau,\gamma}(\state,\action_{\tau:\checkin}) + \nonumber \\[-8pt]
&\qquad\qquad\gamma^{\checkin-\tau}\sum_{\state'\in\states} T^{\checkin-\tau}(s',\action_{\tau:\checkin},s) U^{\opt}(\state') \Bigg).
\label{eq:ub:checkin:unannounced}
\end{align}
}
In contrast, in absence of the bonus check-in, the agent executes the tail of the action $\hat \action=\pi^{\opt}_\psomdp(s)$ computed at the last check-in, $\tau$ time steps before. The reward that results is just

\vspace*{-1ex}
{\small
\begin{equation}
R^{\checkin-\tau,\gamma}(\state,\hat \action_{\tau:\checkin}) + \gamma^{\checkin-\tau}\sum_{\state'\in\states} T^{\checkin-\tau}(s',\hat \action_{\tau:\checkin},s) U^{\opt}(\state').
\end{equation}
}
\vspace*{-1ex}

Since $\hat \action_{\tau:\checkin}\in\actions^{\checkin-\tau}$, $\hat \action_{\tau:\checkin}$ is an admissible solution to \eqref{eq:ub:checkin:unannounced}, and hence, the claim follows.  
\end{proof}

Lemma~\ref{lemma:bonus:unannounced} focuses on state values. In contrast, for announced extra check-ins, we start by providing a bound on Q-values as follows.

\begin{lemma}[Announced extra check-ins bound \psomdp Q-values from above]
\label{lemma:bonus:announced:qvalues}
Consider a \psomdp with an announced extra check-in $\tau$ time steps after a regular check-in. Denote the Q-value of a state-action pair $(\state, \action)$ under the optimal \psomdp policy that ignores the additional check-in as $Q^\opt(\state,\action)$;
and denote the optimal Q-value of the state-action pair $(\state, \action_{0:\tau})$ that uses the extra check-in information as $Q_1^\opt(\state,\action_{0:\tau})$ (where the subscript refers to the fact that a single extra check-in is provided).
Then, $Q_1^\opt(\state,\action_{0:\tau}) \geq Q^\opt(\state,\action)$ for all $\state\in\states, \action\in\actions^\checkin$.
\end{lemma}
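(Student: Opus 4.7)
The plan is to show that in the announced-extra-check-in scenario the agent can always reproduce the behavior induced by the standard \psomdp Q-value by deliberately ignoring the bonus information it receives at time $\tau$; since the optimal policy in the enriched problem dominates any feasible one, optimizing with the extra information can only help. This converts the claim into an inequality between a maximum (over informed continuations) and one specific uninformed continuation.

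Concretely, I would fix arbitrary $\state\in\states$ and $\action\in\actions^\checkin$. In the announced setting, after committing to the prefix $\action_{0:\tau}$ and observing the state $s'$ that is revealed at the extra check-in, the agent may choose any continuation in $\actions^{\checkin-\tau}$ to execute before the next regular check-in at time $\checkin$, and $Q_1^\opt(\state,\action_{0:\tau})$ is the supremum of the resulting expected discounted returns. One feasible, state-independent choice is simply to play the tail $\action_{\tau:\checkin}$ regardless of $s'$, which is exactly what the standard \psomdp policy would do.

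It remains to show that this particular choice yields value $Q^\opt(\state,\action)$. The argument is an unfolding of the composition identities implicit in Definitions~\ref{defn:t-composition} and~\ref{defn:rew-compositions}, chiefly
\begin{equation*}
T^\checkin(s'',\action,s) = \sum_{s'\in\states} T^{\checkin-\tau}(s'',\action_{\tau:\checkin},s')\,T^\tau(s',\action_{0:\tau},s),
\end{equation*}
together with the analogous Bellman-style decomposition
$R^{\checkin,\gamma}(s,\action) = R^{\tau,\gamma}(s,\action_{0:\tau}) + \gamma^\tau\sum_{s'} T^\tau(s',\action_{0:\tau},s)\,R^{\checkin-\tau,\gamma}(s',\action_{\tau:\checkin})$. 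Substituting both into \eqref{eq:mdp:qvalue} and matching up the $\gamma^\tau\cdot\gamma^{\checkin-\tau}=\gamma^\checkin$ factor on the $U^\opt$ term rewrites $Q^\opt(\state,\action)$ in precisely the form of ``execute $\action_{0:\tau}$, then execute $\action_{\tau:\checkin}$ without using the extra check-in, then continue with $U^\opt$'', confirming feasibility.

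The conclusion is then immediate: $Q_1^\opt(\state,\action_{0:\tau})$ takes a maximum over all informed continuations while the state-independent choice $\action_{\tau:\checkin}$ is one feasible option achieving value $Q^\opt(\state,\action)$, so the inequality holds for every $(\state,\action)$. The main obstacle is really just the semigroup decomposition of $T^\checkin$ and $R^{\checkin,\gamma}$; this is a notation-heavy but routine manipulation of the nested sums in \eqref{eq:def:composite-transitions}--\eqref{eq:def:composite-rewards}, where the Markov property lets one re-index over the ``joining'' state $s'$ at time $\tau$ and the discount powers line up exactly because the reward composition is defined with the same $\gamma$.
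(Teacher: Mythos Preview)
Your proposal is correct and mirrors the paper's proof almost exactly: the paper also writes out $Q_1^\opt(\state,\action_{0:\tau})$ as a maximization over continuations $\action_{\tau:\checkin}$, then decomposes $R^{\checkin,\gamma}$ and $T^\checkin$ at time $\tau$ via Definitions~\ref{defn:t-composition} and~\ref{defn:rew-compositions} to rewrite $Q^\opt(\state,\action)$ in the matching form, and concludes by noting the fixed tail $\action_{\tau:\checkin}$ is an admissible choice in that maximization. The only cosmetic difference is that the paper carries out the reward and transition decompositions explicitly as separate displayed equations before substituting, whereas you sketch them inline.
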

\begin{proof}
The optimal Q-value of a state-action pair under a policy that uses the extra check-in information is

{\small
\begin{align}
&Q_1^\opt(\state,\action_{0:\tau}) = R^{\tau,\gamma}(\state,\action_{0:\tau}) + \gamma^\tau \sum_{\state'\in\states} T^{\tau}(s',\action_{0:\tau},s) U^{\opt}_{U,[\tau]}(\state')  = \nonumber \\
&\quad R^{\tau,\gamma}(s,\action_{0:\tau}) + \gamma^\tau \sum_{\state'\in\states} T^{\tau}(\state',\action_{0:\tau},s) \;\;\times \nonumber \\[-10pt]
&\!\!\max_{\action_{\tau:\checkin}\in\actions^{\checkin-\tau}}\Bigg[  R^{\checkin-\tau,\gamma}(\state',\action_{\tau:\checkin}) \;\;+\nonumber  \\[-10pt]
&\hspace*{5.85ex}\gamma^{\checkin-\tau} \sum_{\state''\in\states} T^{\checkin-\tau}(s'',\action_{\tau:\checkin},s') \bigg(\max_{\action''\in\actions^\checkin}Q^{\opt}(\state'',\action'')\bigg)%
\Bigg].\label{eq:upperbound:qvalue:ub}
\end{align}
}

Recall that Equation \eqref{eq:mdp:qvalue} captures the Q-value $Q^\opt(\state,\action)$ of a state-action pair under the optimal policy that ignores the extra check-in.
Using Definition \ref{defn:rew-compositions}, rewrite the reward $R^{\checkin,\gamma}(\state, \action)$ in \eqref{eq:mdp:qvalue} as

{\small
\begin{align}
R^{\checkin,\gamma}(\state, \action)\! =\! R^{\tau,\gamma}(\state,\action_{0:\tau})+ \gamma^{\tau}\! \sum_{\state'\in\states} T^{\tau}\!(\state',\action_{0:\tau},\state) R^{\checkin-\tau,\gamma}(\state',\action_{\tau:\checkin}), \label{eq:upperbound:reward}
\end{align}
}
and use Definition \ref{defn:t-composition} to rewrite the reward-to-go as
{\small
\begin{align}
& \sum_{\state''\in \states} T{^\checkin}(\state'',\action,\state) \max_{\action''\in\actions^\checkin} Q^\opt(\state'',\action'')= \label{eq:upperbound:cost-to-go}\\[-6pt]
& \quad \sum_{\state'\in \states} T^\tau(\state',\action_{0:\tau},\state) \sum_{\state''\in\states} T^{\checkin-\tau}(\state'',\action_{\tau:\checkin}, \state') 
\bigg(\max_{\action''\in\actions^\checkin} Q^\opt(\state'',\action'')\bigg). \nonumber
\end{align}
}

Replacing \eqref{eq:upperbound:reward} and \eqref{eq:upperbound:cost-to-go} in \eqref{eq:mdp:qvalue}, we obtain
{\small
\begin{align}
Q^{\opt}(\state,\action) =&\;R^{\tau,\gamma}(\state,\action_{0:\tau})+\gamma^{\tau} \sum_{\state'\in\states} T^{\tau}(\state',\action_{0:\tau},\state) R^{\checkin-\tau,\gamma}(\state',\action_{\tau:\checkin}) + \nonumber \\[-6pt]
&\hspace*{-7ex} \gamma^{\tau}\gamma^{\checkin-\tau} \sum_{\state'\in \states} T^\tau(\state',\action_{0:\tau},\state) \sum_{\state''\in\states} T^{\checkin-\tau}(\state'',\action_{\tau:\checkin}, \state')  \bigg(\max_{\action''\in\actions^\checkin} Q^\opt(\state'',\action'')%
\bigg) \nonumber\\[12pt]
=&\;R^{\tau,\gamma}(\state,\action_{0:\tau}) + \gamma^{\tau} \sum_{\state'\in\states}  T^{\tau}(\state',\action_{0:\tau},\state) \;\;\times \label{eq:upperbound:qvalue:expanded} \\[-6pt]
& \hspace*{-9ex}\Bigg( R^{\checkin-\tau,\gamma}(\state',\action_{\tau:\checkin}) + \gamma^{\checkin-\tau}\!\sum_{\state''\in\states}\!\!T^{\checkin-\tau}(\state'',\action_{\tau:\checkin}, \state') \bigg(\max_{\action''\in\actions^\checkin}  Q^\opt(\state'',\action'')\bigg)
\Bigg).  \nonumber
\end{align}
}
Comparing \eqref{eq:upperbound:qvalue:expanded} with \eqref{eq:upperbound:qvalue:ub}, one can see that former is an admissible solution to the maximization problem in the latter: therefore, $Q_1^\opt(\state,\action_{0:\tau})\geq Q^{\opt}(\state,\action)$.
\end{proof}

We then use Lemma~\ref{lemma:bonus:announced:qvalues} to provide a bound on state values.

\begin{lemma}[Announced extra check-ins bound \psomdp state values from above]
Consider a \psomdp with an announced extra check-in $\tau$ time steps after a regular check-in. The optimal policy that uses the information provided by the announced extra check-in has an expected discounted reward no lower than  the original \psomdp policy, and no lower than the reward from an unannounced bonus check-in at time $\tau$.
\label{lemma:bonus:announced}
\end{lemma}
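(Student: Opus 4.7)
The plan is to establish the two comparisons separately, leveraging Lemma \ref{lemma:bonus:announced:qvalues} for the first and an optimization-over-a-larger-set argument for the second.

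First, to show that the announced-extra-check-in value dominates the original \psomdp value $U^\opt(s)$, I would lift the Q-value bound of Lemma \ref{lemma:bonus:announced:qvalues} to the corresponding state values. The state value under the announced scheme at $s$ equals $\max_{\action_{0:\tau}\in\actions^\tau} Q_1^\opt(s,\action_{0:\tau})$, by the usual relationship between state values and Q-values in the enlarged decision process. Lemma \ref{lemma:bonus:announced:qvalues} guarantees $Q_1^\opt(s,\action_{0:\tau})\geq Q^\opt(s,\action)$ for every $\action\in\actions^\checkin$. Since every prefix $\action_{0:\tau}\in\actions^\tau$ arises as the prefix of some full $\action\in\actions^\checkin$, maximizing over action sequences on both sides yields $\max_{\action_{0:\tau}} Q_1^\opt(s,\action_{0:\tau})\geq \max_{\action} Q^\opt(s,\action) = U^\opt(s)$.

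Second, to show that the announced value also dominates the time-$0$ value achievable under the unannounced bonus check-in, I would write both quantities explicitly and compare them as optimization problems sharing the same tail-value. Under the unannounced scheme, the agent commits to $\hat \action = \pi^\opt_\psomdp(s)$ at time $0$, executes its prefix $\hat \action_{0:\tau}$, and then, once the bonus check-in discloses state $s'$ at time $\tau$, switches to the re-optimized tail captured by $U^\opt_{U,[\tau]}(s')$ from Lemma \ref{lemma:bonus:unannounced}. The resulting expected return is $R^{\tau,\gamma}(s,\hat \action_{0:\tau})+\gamma^\tau \sum_{s'\in\states} T^\tau(s',\hat \action_{0:\tau},s)\,U^\opt_{U,[\tau]}(s')$. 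Under the announced scheme, the same tail-value $U^\opt_{U,[\tau]}$ applies, but the first $\tau$ actions may now be re-optimized using the knowledge that a check-in will occur at $\tau$; the announced value at $s$ is exactly the maximum of the same expression taken over $\action_{0:\tau}\in\actions^\tau$. Since $\hat \action_{0:\tau}$ is one feasible argument of that maximization, the claim follows immediately.

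The main obstacle is bookkeeping rather than genuine mathematical difficulty. One must take care to distinguish the time-$\tau$ value $U^\opt_{U,[\tau]}(\cdot)$ defined in Lemma \ref{lemma:bonus:unannounced} from the implicit time-$0$ value of the unannounced scheme, and to recognize that the \emph{same} tail-value function appears in the two expansions being compared --- because, once the extra check-in has occurred, the remaining $\checkin-\tau$ steps are handled identically by the optimal sub-policy in either setting. Once the two expressions are aligned this way, both inequalities collapse to the standard principle that an optimization over a set is at least as good as any single feasible element within it.
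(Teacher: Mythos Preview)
Your proposal is correct and follows essentially the same approach as the paper: for the first inequality you lift Lemma~\ref{lemma:bonus:announced:qvalues} from Q-values to state values by maximizing over action prefixes, and for the second you observe that the unannounced time-$0$ value equals $Q_1^\opt(s,\hat\action_{0:\tau})$ for the particular prefix $\hat\action_{0:\tau}=\pi^\opt_\psomdp(s)_{0:\tau}$, which is a feasible argument of the announced maximization. Your careful remark about the shared tail-value $U^\opt_{U,[\tau]}$ is exactly the alignment the paper exploits (implicitly) when it identifies the unannounced return with $Q_1^\opt(\state,\hat\action_{0:\tau})$.
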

\begin{proof}
The optimal reward for an agent in state $s$ when an announced extra check-in is revealed is
\begin{equation}
U^{\opt}_{[\tau\,]}(\state)=\max_{\action_{0:\tau}\in\actions^{\tau}} Q_1(\state,\action_{0:\tau}). \label{eq:upperbound:announced:U}
\end{equation}
In contrast, the expected discounted reward if no extra check-ins are available can be written as
$U^{\opt}(\state)= 
\max_{\action\in\actions^\checkin}Q(\state,\action).$
Lemma~\ref{lemma:bonus:announced:qvalues} shows that $Q_1(\state,\action_{0:\tau})\geq Q(\state,\action), \forall \state\in\states, \action\in\actions^\checkin$; the claim follows.

We also show that the reward \eqref{eq:upperbound:qvalue:ub} is no lower than the corresponding reward for an unannounced bonus.
The reward in state $s$ for an agent that will receive a bonus check-in after $\tau$ time steps (but does not know it yet) is
\begin{align*}
R^{\tau,\gamma}(s,\hat \action_{0:\tau}) + \gamma^\tau \sum_{\state'\in\states} T^{\tau}(s',\hat \action_{0:\tau},s) U^{\opt}_{U,\tau}(\state') = Q_1^\opt(\state,\hat \action_{0:\tau})
\end{align*}
where $\hat \action=\pi_\psomdp^\opt(\state)$ follows the optimal policy in absence of check-ins. The action prefix $\hat \action_{0:\tau}$ is an admissible solution to the maximization problem in \eqref{eq:upperbound:announced:U}; the claim follows.
\end{proof}

Next, we consider the effect of adding announced extra check-ins after \emph{every} regular check-in.

First, we provide an auxiliary definition.
\begin{definition}[\psomdp with additional check-ins]
\label{defn:psomdp-additional-checkins}
Consider a \psomdp \instance{M} with check-in period $\checkin$.
We define a \psomdp with additional check-ins $\instance{\hat M}^{\tau}$  as a modification of  \psomdp \instance{M} where, 
after an action is taken, an \emph{additional} check-in occurs after $\tau<\checkin$ steps. 
That is, a policy for $\instance{\hat M}^{\tau}$ specifies an action of length
$\tau$ (to be taken if the previous action was of length $\checkin - \tau)$,
and an action of length  $\checkin - \tau$ (to be taken if the previous action
was of length $\tau$) for every state. We denote as $Q(s,\action_{0:\tau})$ the Q-value
of the state-action pair $s,\action_{0:\tau}$, i.e. the set of values that
satisfy

{\small
\begin{align*}
Q^\opt(s,\action_{0:\tau}) = &\;R^{\tau,\gamma}(s, \action_{0:\tau}) + \gamma^\tau\!\sum_{s'\in S}\!T^{\tau}(s', \action_{0:\tau}, s)\!\!\max_{\action_{\tau:\checkin}' \in A^{\checkin - \tau}}\!Q^{\opt}(s',\action_{\tau:\checkin}'), \\[4pt]
Q^\opt(s,\action_{\tau:\checkin}) = &\;R^{\checkin-\tau,\gamma}(s, \action_{\tau:\checkin}) + \\
& \hspace*{10ex}\gamma^{\checkin-\tau} \sum_{s'\in S} T^{\checkin-\tau}(s', \action_{\tau:\checkin}, s)\max_{\action_{0:\tau}' \in A^{\tau}}\!Q^{\opt}(s',\action_{0:\tau}').
\end{align*} 
}
\end{definition}

\begin{theorem}[extra check-ins  bound \psomdp Q-values from above]
\label{thm:superset-checkins-upperbound}

Suppose $\instance{M}$ is  a \psomdp with check-in period $\checkin$, and denote the associated optimal Q-values as $Q^{\opt}(\state,\action)$. Also consider a modified \psomdp $\instance{\hat M}^{\tau}$  with additional check-ins at $\tau$, as per Definition~\ref{defn:psomdp-additional-checkins}, 
and denote the associated state values as $\hat Q^{\opt}(\state,\action_{0:\tau})$.
Then, $Q^{\opt}(\state,\action)\leq \hat Q^{\opt}(\state,\action_{0:\tau})$ for all $\state\in\states,\action\in\actions^{\checkin}$.
\end{theorem}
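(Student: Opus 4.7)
The plan is to extend the single-check-in argument of Lemma~\ref{lemma:bonus:announced:qvalues} to the infinite-horizon setting by substituting the two Bellman recursions of Definition~\ref{defn:psomdp-additional-checkins} into one another, and then invoking monotonicity of the composite-action Bellman operator to propagate the resulting inequality.

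First I would unfold $\hat Q^\opt(s,\action_{0:\tau})$ by substituting the second equation of Definition~\ref{defn:psomdp-additional-checkins} into the first, producing a two-step expansion that sums over both the state at step $\tau$ and the state at step $\checkin$. At this point, for any fixed $\action \in \actions^\checkin$, I would lower-bound the inner maximum $\max_{\action'_{\tau:\checkin}\in\actions^{\checkin-\tau}}\hat Q^\opt(s',\action'_{\tau:\checkin})$ by plugging in the specific suffix $\action_{\tau:\checkin}$. Composing the discounted rewards and transitions by the identities exploited in equations~(\ref{eq:upperbound:reward}) and~(\ref{eq:upperbound:cost-to-go}) of the proof of Lemma~\ref{lemma:bonus:announced:qvalues} (i.e., collapsing $R^{\tau,\gamma}$ followed by a discounted $R^{\checkin-\tau,\gamma}$ back into $R^{\checkin,\gamma}$, and similarly for the transitions), the chain of substitutions yields
\begin{equation*}
\hat Q^\opt(s,\action_{0:\tau}) \;\geq\; R^{\checkin,\gamma}(s,\action) + \gamma^\checkin \sum_{s''\in\states} T^\checkin(s'',\action,s)\, f(s''),
\end{equation*}
where $f(s) = \max_{\action_{0:\tau}\in\actions^\tau}\hat Q^\opt(s,\action_{0:\tau})$.

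Next, I would take the maximum over $\action\in\actions^\checkin$ on both sides. The left-hand side becomes $f(s)$, while the right-hand side is precisely $(\mathcal{B}f)(s)$, where $\mathcal{B}$ denotes the optimal Bellman operator of $\instance{M}$'s composite-action MDP. Standard MDP theory then finishes the argument: $\mathcal{B}$ is a monotone $\gamma^\checkin$-contraction whose unique fixed point is $U^\opt$, so the super-fixed-point inequality $f \geq \mathcal{B}f$ implies, by monotonicity, $f \geq \mathcal{B}f \geq \mathcal{B}^2 f \geq \cdots$, and passing to the limit via contraction yields $f \geq U^\opt$. Substituting $f \geq U^\opt$ back into the displayed inequality gives $\hat Q^\opt(s,\action_{0:\tau}) \geq R^{\checkin,\gamma}(s,\action) + \gamma^\checkin\sum_{s''}T^\checkin(s'',\action,s)\,U^\opt(s'') = Q^\opt(s,\action)$, which is the claim.

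The main obstacle I anticipate is the algebraic bookkeeping required so that the unfolded two-step expression collapses cleanly into a Bellman-shaped right-hand side for $\instance{M}$; the composition identities for rewards and transitions must be threaded carefully through the discount factors $\gamma^\tau$ and $\gamma^{\checkin-\tau}$. The conceptual content, that extra observations can only help and that this benefit accrues at every check-in, is morally an iterated application of Lemma~\ref{lemma:bonus:announced:qvalues}; the monotone super-fixed-point step is what lets this argument close off cleanly over the infinite horizon rather than only for one extra check-in.
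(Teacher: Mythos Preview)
Your argument is correct and takes a genuinely different route from the paper's. The paper proceeds by induction on the number $\ell$ of extra check-ins: it introduces intermediate quantities $\hat Q_\ell^\opt$ (the optimal Q-value when only the next $\ell$ periods receive an extra check-in at~$\tau$), uses Lemma~\ref{lemma:bonus:announced:qvalues} as the base case, and shows $\hat Q_{\ell+1}^\opt \geq \hat Q_\ell^\opt$ by comparing two nested maximizations that differ only in the innermost term. Your approach instead works directly with the fixed point $\hat Q^\opt$ of Definition~\ref{defn:psomdp-additional-checkins}: you unfold one full period, drop the inner $\max$ by fixing the suffix $\action_{\tau:\checkin}$, collapse via the composition identities, and then recognize the resulting inequality as $f \geq \mathcal{B}f$ for the composite-action Bellman operator~$\mathcal{B}$, whence $f \geq U^\opt$ by the standard monotone-contraction/super-fixed-point argument. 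The paper's induction makes the ``one more check-in helps'' intuition explicit at each step and avoids appealing to operator-theoretic facts; your route is shorter, sidesteps the auxiliary $\hat Q_\ell$ family entirely, and leans on off-the-shelf MDP machinery that the paper already implicitly assumes elsewhere. Either is a valid proof of the theorem.
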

\begin{proof}
The proof is by induction on the number of extra check-ins. We define as $\hat Q_{\ell}^\opt(\state,\action)$ the optimal Q-value when $\ell$ consecutive extra check-ins are provided, with $\hat Q_0(\state,\action_{0:\tau})=Q(\state,\action)$, and $\hat Q_1(\state,\action_{0:\tau})$ defined in Equation \eqref{eq:upperbound:qvalue:ub}. 
We show that, for all $\ell\in\mathbb{N}$, $\hat Q_{\ell+1}^\opt(\state,\action_{0:\tau})\geq \hat Q_{\ell}^\opt(\state,\action_{0:\tau})$, which implies that 
$\hat Q_{\ell+1}^\opt(\state,\action_{0:\tau})\geq \hat Q_{0}^\opt(\state,\action_{0:\tau})=Q^\opt(\state,\action)$.
 \\
\indent\emph{Base case:} One extra check-in, is Lemma~\ref{lemma:bonus:announced:qvalues}. 
\\
\indent\emph{Inductive case:} \gobble{Next, we show that, if receiving bonus check-ins for $\ell$ periods improves the state value, then receiving bonus check-ins for $\ell+1$ periods also does.}
We wish to show that, if $\hat Q_{\ell-1}^{\opt}(\state,\action_{0:\tau})\leq \hat Q_{\ell}^{\opt}(\state,\action_{0:\tau})$, then $\hat Q_{\ell}^{\opt}(\state,\action_{0:\tau})\leq Q_{\ell+1}^{\opt}(\state,\action_{0:\tau})$.
We can express $\hat Q_{\ell+1}^{\opt}(\state,\action_{0:\tau})$ as

\vspace*{-2ex}
{\small
\begin{align}
&\hat Q_{\ell+1}^{\opt}(\state,\action_{0:\tau}) =  R^{\tau,\gamma}(\state, \action_{0:\tau}) + \gamma^{\tau} \sum_{\state'\in\states} T^\tau(\state',\action_{0:\tau},\state) \times \nonumber  \\[-4pt]
&\hspace*{3ex} \max_{\action'_{\tau:\checkin}\in\actions^{\checkin-\tau}} \Bigg[ R^{\checkin-\tau,\gamma}(\state',\action_{\tau:\checkin})\; +  \label{eq:extra-checkins:lp} \\[-4pt]
&\hspace*{7ex}
 \gamma^{\checkin-\tau} \sum_{\state''\in\states} T^{\checkin-\tau}(\state'',\action_{\tau:\checkin}, \state')\times
\bigg(\max_{\action''_{0:\tau}\in\actions^{\tau}}Q_\ell^{\opt} (\state'',\action''_{0:\tau})\bigg) \Bigg].\nonumber\end{align}
}
Analogously, we can express $\hat Q_{\ell}^{\opt}(\state,\action_{o:\tau})$ as
{\small
\begin{align}
&\hat Q_{\ell}^{\opt}(\state,\action_{0:\tau}) = R^{\tau,\gamma}(\state, \action_{0:\tau}) + \gamma^{\tau} \sum_{\state'\in\states} T(\state',\action_{0:\tau},\state) \times \nonumber  \\[-4pt]
&\hspace*{2ex} \max_{\action'_{\tau:\checkin}\in\actions^{\checkin-\tau}} \Bigg[ R^{\checkin-\tau,\gamma}(\state',\action_{\tau:\checkin})\; + \label{eq:extra-checkins:l}\\[-4pt]
&\hspace*{5ex}
 \gamma^{\checkin-\tau} \sum_{\state''\in\states} T^{\checkin-\tau}(\state'',\action_{\tau:\checkin}, \state')\times
\bigg(\max_{\action''_{0:\tau}\in\actions^{\tau}}Q_{\ell-1}^{\opt} (\state'',\action''_{0:\tau})\bigg) \Bigg]. \nonumber
\end{align}
}
The expressions for $\hat Q_{\ell+1}^{\opt}$ and $\hat Q_{\ell}^{\opt}$ present the same
nested maximization problems and have identical arguments except for
$\hat Q_{\ell}^{\opt}$ in \eqref{eq:extra-checkins:lp} and $\hat Q_{\ell-1}^{\opt}$ in
\eqref{eq:extra-checkins:l}; by the inductive assumption, $\hat Q_{\ell}^{\opt}\geq
\hat Q_{\ell-1}^{\opt}$, therefore $\hat Q_{\ell+1}^{\opt}\geq \hat Q_{\ell}^{\opt}$ and the claim
follows.
\end{proof}

The difference in expressions for values when 
check-ins are unannounced (Lemma~\ref{lemma:bonus:unannounced}) versus 
announced (Lemma~\ref{lemma:bonus:announced}), and indeed the
numerical difference in their respective value functions, expresses the value
of knowing beforehand that a check-in will occur. 
Like most information, this has value; 
but notice that announcements are second-order 
statements: they are statements \emph{about} subsequent disclosures.

\subsection{More frequent check-ins are not always beneficial}
\label{sec:ub:check-in-counter-example}

Next, in a perhaps unintuitive result, we show that increasing the frequency of check-ins is \emph{not} guaranteed to bound state values from above. To show this, we provide a counterexample in Figure~\ref{fig:ub:counterexample}. In it, an agent is tasked with navigating a simple grid world with multiple ranks of obstacles (shown in purple) set three steps away from each other; the goal is to reach the cyan cell on the right. The agent's control is imprecise: when commanded to drive in a given direction, the agent also drifts to the left or to the right of the desired direction with 5\% probability each. 
\begin{figure}[h]
\jvspace{-1em}
\centering
\begin{subfigure}[b]{.45\linewidth}
\centering
\begin{tikzpicture}
\node at (1.35,0) {\includegraphics[scale=0.175,angle=-90]{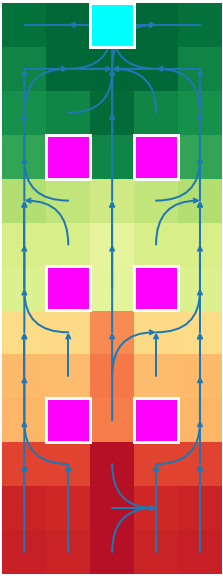}};
\node at (0,0) {\scriptsize\textcolor{white}{$\star$}};
\end{tikzpicture}
\caption{$\checkin=2$.}
\label{fig:ub:counterexample:2}
\end{subfigure}
\begin{subfigure}[b]{.45\linewidth}
\centering
\begin{tikzpicture}
\node at (0,0) {\includegraphics[scale=0.175,trim=0 0 8.2cm 0cm, clip,angle=-90]{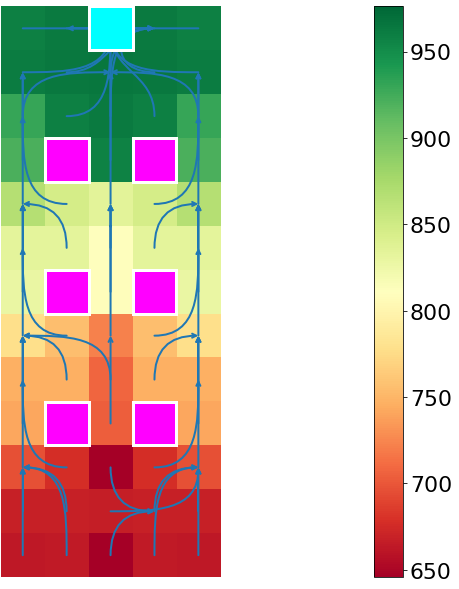}};
\node at (-1.35,0) {\scriptsize\textcolor{white}{$\star$}};
\end{tikzpicture}
\caption{$\checkin=3$.}
\label{fig:ub:counterexample:3}
\end{subfigure}
\includegraphics[scale=0.095,trim=15cm -2.6cm 0cm 0cm, clip]{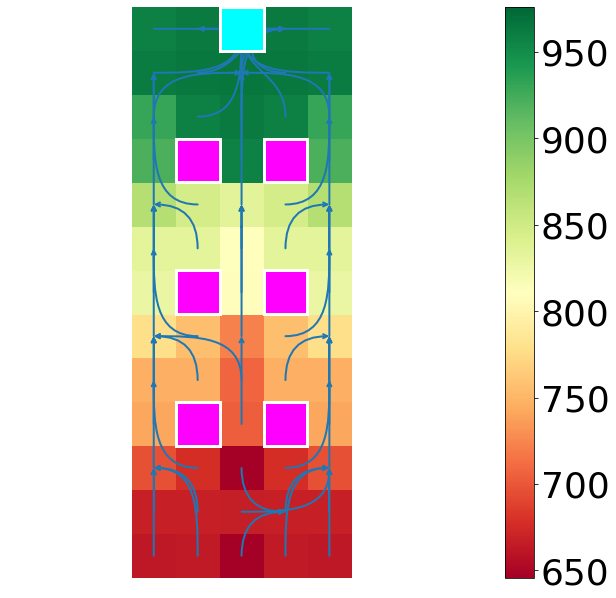}
\caption{State values and policy for a grid-world \psomdp. Increasing the check-in frequency can be detrimental to state values. The expected reward for the starred location in (b) is higher than that same location in (a).}
\label{fig:ub:counterexample}
\jvspace{-1em}
\end{figure}

We compare two \psomdps with $\checkin=2$ (Figure~\ref{fig:ub:counterexample:2}) and $\checkin=3$ (Figure~\ref{fig:ub:counterexample:3}), respectively.
When the time between check-ins is $\checkin=3$, corresponding to the
stride between obstacles, the state values (and therefore the Q-values) for
many cells farthest from the goal are \emph{higher} compared to the case with
more frequent check-ins. (The star helps indicate one especially clear example.)
Intuitively, more infrequent check-ins provide information at times that are well-attuned with the environment, allowing the agent to recognize its position just before traversing each rank of obstacles.

\vspace*{-4ex}
\subsection{Extended Q-values}
\vspace*{-2ex}
\label{sec:ub:extended-Q-values}

We generalize the upper bounds identified in Section \ref{sec:ub:bonus} by
introducing the notion of \emph{extended Q-values}.
Intuitively, extended Q-values generalize the notion of announced extra check-ins in two ways: they allow extra check-ins to occur periodically, as opposed to once, and also allow multiple extra check-ins to occur between pairs of regularly-scheduled ones. By changing the times at which extra check-ins occur, extended Q-values provide a way of generating  families of
upper bounds for a given \psomdp problem.
Adding additional extra check-ins to a given instance results in lower computational complexity, at the price of a looser upper bound; however we note that, as discussed in the example above, simply increasing the frequency of extra check-ins does not necessarily result in looser upper bounds.

To start with, we consider $\checkin$ separate state value functions, $Q^{\opt}_{[m]}: S\times
\actions \to \Reals, m \in \{0,\dots,\checkin-1\}$, each
corresponding to the addition of an unannounced bonus check-in at offset
$\checkin-m$ from the regular check-ins, defined
as follows: %

\vspace*{-2ex}
{\small
\begin{subequations}
{\small
\begin{align}
Q^{\opt}_{[0]}(\state,\action)&=Q^{\opt}(\state,\action),\\[6pt]
Q^{\opt}_{[m]}(s,\action) & = R^{m,\gamma}(s, \action_{0:m})\;+\;\gamma^m \sum_{s'\in S} T^{m}(s', \action_{0:m}, s)\;\times\label{eq:bounds:ub:qvalues:base:m} \\[-6pt] 
& \hspace*{30ex}\max_{\action' \in A^\checkin}Q^{\opt}(s',\action').\nonumber
\end{align}
}
\label{eq:bounds:ub:qvalues:base}
\end{subequations}
}
Note that Equation \eqref{eq:bounds:ub:qvalues:base:m} is identical to the
argument of the minimizer in Equation \eqref{eq:ub:checkin:unannounced}.  
Next, we rewrite \eqref{eq:bounds:ub:qvalues:base}, by admitting the
possibility of receiving, additionally, an announced extra check-in, but selecting
its placement so that it results in the
\emph{smallest} Q-value:\pagebreak %

\vspace*{-5.5ex}
\begin{subequations}
{\small
\begin{align}
Q^{\opt}_{[0]}(\state,\action)  = &\min_{\ell \in \{1,\dots,\checkin\}} \bigg(R^{\ell,\gamma}(s, \action_{0:\ell}) + \nonumber \\[-6pt]
& \hspace*{6ex}\gamma^\ell \sum_{s'\in S} T^{\ell}(s', \action_{0:\ell}, s) \max_{\action'\in\actions^\checkin}Q^{\opt}_{[\checkin-{\ell}]}(s',\action') \bigg),\\[2pt]
Q^{\opt}_{[m]}(\state,\action) = & \min_{\ell \in \{1,\dots,m\}} \bigg(R^{\ell,\gamma}(s, \action_{0:\ell}) + \nonumber \\[-6pt]
&\hspace*{6ex}\gamma^\ell \sum_{s'\in S} T^{\ell} (s', \action_{0:\ell}, s)\max_{\action'\in\actions^\checkin} Q^{\opt}_{[m-\ell]}(s',\action') \bigg).
\end{align}
}
\label{eq:bounds:ub:qvalues:ext}
\end{subequations}
As shown in Theorem~\ref{thm:superset-checkins-upperbound}, the minimum state value is achieved when $\ell=m$, and no extra check-ins are provided, thus Equations \eqref{eq:bounds:ub:qvalues:ext} are \emph{equivalent} to \eqref{eq:bounds:ub:qvalues:base}.
But, importantly, Equations \eqref{eq:bounds:ub:qvalues:ext} can be manipulated
to create families of upper bounds by removing selected entries from the
argument of the $\min$ operator. 
We pick extra check-in times
so each can form some subset 
summing to $\checkin$, i.e., 
$B\subset \{1,\dots, \checkin\}$ 
satisfying the following:

\vspace*{-1.6ex}
{\small
\[
\forall \ell\in B,\; \exists \hat B_\ell\subseteq B \text{ such that } \bigg(\ell+\sum_{\hat \ell \in \hat B_\ell} \hat \ell\bigg) = \checkin.
\]
}

Intuitively, the property ensures that the extra 
check-in times can be ``composed'' to achieve a stride of \checkin. 
In practice this is easily achieved, e.g., by selecting $B$ to contain divisors of
\checkin, or by ensuring that $(\ell\in B)\iff
((\checkin-\ell)\in B)$.

Next, we modify \eqref{eq:bounds:ub:qvalues:ext} as follows: 

\vspace*{-2.0ex}
\begin{subequations}
{\small
\begin{align}
Q^\opt_{[0]}(\state,\action) \leq \overline Q^\opt_{[0]}(\state,\action) =&\min_{\ell \in B} \Bigg(R^{\ell,\gamma}(s, \action_{0:\ell}) + \label{eq:bounds:ub:qvalues:mod:0} \\[-8pt]
& \gamma^\ell \sum_{s'\in S} T^{\ell}(s', \action_{0:\ell}, s) \max_{\action'\in\actions^\checkin}\overline Q^{\opt}_{[\checkin-{\ell}]}(s',\action') \Bigg), \nonumber\\[-2pt]
Q^{\opt}_{[m]}(\state,\action) \leq \overline Q^{\opt}_{[m]}(\state,\action)  =&\min_{\ell \in B, \ell\leq m} \Bigg(R^{\ell,\gamma}(s, \action_{0:\ell}) + \label{eq:bounds:ub:qvalues:mod:m} \\[-8pt]
&\gamma^\ell \sum_{s'\in S} T^{\ell} (s', \action_{0:\ell}, s)\max_{\action'\in\actions^\checkin} \overline Q^{\opt}_{[m-\ell]}(s',\action') \Bigg), \nonumber %
\end{align}
}
\label{eq:bounds:ub:qvalues:mod}
\end{subequations}

\vspace*{-1.0ex}

\noindent where the \emph{in}equality here (unlike the equality in \eqref{eq:bounds:ub:qvalues:ext}) derives from the fact that the minimum is taken over only a subset of all possible check-in times.
In \eqref{eq:bounds:ub:qvalues:mod:0} the ``less than'' requirement of the minimizer is omitted because 
$Q^\opt_{[0]}$ and $\overline Q^\opt_{[0]}$ correspond to $m = \checkin$, and $\ell\in B$ is less than $\checkin$ by definition.

Evaluating the upper bounds in \eqref{eq:bounds:ub:qvalues:mod} requires solving a set of up to $\checkin$ coupled \mdps with actions of length $\ell\in B$.
A standard procedure for solving \mdps is Value Iteration~\cite{bertsekas19reinforcement}, which has time
complexity $O(|A||S|^2)$ per iteration\,\cite{russell09artificial}. Hence, the complexity of solving a \psomdp as an \mdp (following Definition \ref{defn:composite-action}) grows exponentially with the time between check-ins \checkin as $|A|^{\checkin}$. Actions in \eqref{eq:bounds:ub:qvalues:mod} have length $\ell<\checkin$; therefore, solving \eqref{eq:bounds:ub:qvalues:mod} for carefully-selected values of $\ell\in B$ can provide upper bounds at much lower cost compared to solving the original \psomdp problem as an \mdp.

\subsection{Selecting the set of check-in times $B$}

Selection of the set $B$ is critical to achieve a good balance between computational complexity and tightness of the bounds $\overline Q^\opt(\state,\action_{0:\overline\ell})$. In this section, we show that selecting $B$ to contain a single divisor of \checkin, i.e. $B=\{\ell\}$ for $\checkin=n\ell, n\in\mathbb{N}$, results in significant computational savings.

Equation \eqref{eq:bounds:ub:qvalues:mod} becomes

\vspace*{-1ex}
\begin{subequations}
{\small
\begin{align}
\overline Q^\opt_{[0]}(\state,\action) =&\;R^{\ell,\gamma}(s, \action_{0:\ell}) +  \gamma^\ell \sum_{s'\in S} T^\ell(s', \action_{0:\ell}, s) \max_{\action'\in\actions^\checkin}\overline Q^{\opt}_{[\checkin-{\ell}]}(s',\action') , \label{eq:bounds:ub:qvalues:BL:0} %
\end{align}
}
\label{eq:bounds:ub:qvalues:B1}
\end{subequations}

\addtocounter{equation}{-1}
\vspace*{-5.2ex}
\begin{subequations}
\stepcounter{equation}
{\small
\begin{align}
\overline Q^{\opt}_{[m]}(\state,\action)  =&\;R^{\ell,\gamma}(s, \action_{0:\ell}) + \gamma^\ell \sum_{s'\in S} T^\ell (s', \action_{0:\ell}, s)\max_{\action'\in\actions^\checkin} \overline Q^{\opt}_{[m-\ell]}(s',\action') ,\label{eq:bounds:ub:qvalues:B1:m}  \\[-6pt]
&\hspace*{6ex}\text{for those } m \in \{\ell, 2\cdot\ell,\dots,\checkin-\ell\}.\notag
\end{align}
}
\end{subequations}
\vspace*{-1ex}

By writing out the Equations~\eqref{eq:bounds:ub:qvalues:B1} for $\ell$, and
$2\cdot\ell$, and then $3\cdot\ell$, and so on, we observe two things.
Firstly, each is only concerned with the choice of action sequences of length exactly~$\ell$.  
Secondly, they are all, actually, posing precisely the same
optimization problem.  The separate degrees-of-freedom offered by having
multiple functions, like both $\overline Q^\opt_{[\ell]}(\state,\action)$ and
$\overline Q^\opt_{[2\cdot\ell]}(\state,\action)$, is unnecessary at the
optimum.  The $\frac{\checkin}{\ell}$ copies of the function are redundant
because, in all these cases, the agent begins at a known state, and solving
over a sequence of $\ell$ steps, arrives then at a state which will be
observed.  Therefore, we drop the bracketed subscript and the admissible solution with these extra check-ins has
just:

\vspace*{-1ex}
{\small
\begin{align}
\overline Q^\opt(\state,\action_{0:\ell}) =&R^{\ell,\gamma}(s, \action_{0:\ell}) +  \gamma^\ell \sum_{s'\in S} T^\ell(s', \action_{0:\ell}, s) \max_{\action'\in\actions^\ell} \overline Q^\opt(\state,\action_{0:\ell}'). \label{eq:bounds:ub:qvalues:BL:S}
\end{align}
}
(We have been especially explicit in our notation to emphasize that all of these actions have length only $\ell$.) 
Thus, this is equivalent to a \mdp with $|S|$ states and $|A|^\ell$ actions.

\smallskip

A special case of the selection above is $B=\{1\}$, corresponding to an \emph{omniscient relaxation} where the policy has access to state information at all time steps.

\section{Lower Bounds}
\label{sec:lb}

Next, we turn our attention to establishing lower bounds that provide suboptimal, feasible policies for the \psomdp that are computationally efficient to compute.

To achieve these lower bound, we reduce the action space and only search through \emph{action prefixes} of length $\tau<\checkin$.
Consider an arbitrary, fixed action suffix $\actionsuffix = (a_{\tau+1}, a_{\tau+2}, \ldots, a_{\checkin})$ of length $\checkin-\tau$. We consider the set of all actions with suffix $\actionsuffix$, that is,
\[
\underline \actions_\tau^\actionsuffix = \{\action\in\actions^\checkin \mid \action_{\tau:\checkin}=\actionsuffix\}=\{(\action \actionsuffix) \mid \action \in \actions^\tau\}.
\]
In order to achieve a lower bound, we solve a restricted \psomdp where the action space is limited to $\underline \actions_\tau^\actionsuffix$. The corresponding Q-values can be computed as

{\small
\begin{equation}
\underline Q^\opt(\state,\action) = R^{\checkin,\gamma}(\state,\action) + \gamma^\checkin \sum_{\state'\in\states} T^\checkin(\state',\action,\state) \max_{\action'\in\underline \actions_\tau^\actionsuffix} \underline Q^\opt(\state',\action'),
\label{eq:lowerbound:qvalue}
\end{equation}
}
for all $\action\in\underline\actions_\tau^\actionsuffix$.

Solving Equation \eqref{eq:lowerbound:qvalue} through value iteration incurs a computational complexity of $O(|S|^2|A|^{\ell})$, which is significantly smaller than the complexity of solving the full \psomdp as a \mdp, i.e.,  $O(|S|^2|A|^{\checkin})$.

The following lemma shows that $\underline Q^\opt(\state,\action)$ is indeed a lower bound on $Q^\opt(\state,\action)$.
\begin{lemma}[Restricting the action set lower-bounds the Q-values of the selected actions]
\label{lemma:lb:base}
Consider a \psomdp  $\langle S, A, T, R, \checkin  \rangle$ with Q-values $Q^\opt(\state,\action)$. Also consider the restriction of the \psomdp to action sequences $\underline \actions_\tau^\actionsuffix$, with Q-values $\underline Q^\opt(\state,\action)$. Then, $\underline Q^\opt(\state,\action)\leq Q^\opt(\state,\action) \forall \state\in\states, \action\in\underline\actions_\tau^\actionsuffix$.
\end{lemma}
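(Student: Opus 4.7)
The plan is to establish the inequality via a standard monotonicity argument applied to the Bellman backup. The key observation is that, by construction, $\underline{\actions}_\tau^{\actionsuffix} \subseteq \actions^{\checkin}$, so any policy restricted to the smaller action set is also a feasible policy for the unrestricted \psomdp. The only technical tool needed beyond this set-inclusion is the elementary fact that maximizing a function over a subset yields a value no larger than maximizing over the full set.

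Concretely, I would proceed by induction on the iterates of value iteration. Let $\underline{Q}^{(k)}$ and $Q^{(k)}$ denote the $k$-th iterate of value iteration for the restricted and unrestricted problems, each initialized at zero, so the base case holds trivially. For the inductive step, fix $\state \in \states$ and $\action \in \underline{\actions}_\tau^{\actionsuffix}$. Since the composed reward $R^{\checkin,\gamma}(\state,\action)$ and the composed transition kernel $T^{\checkin}(\cdot,\action,\state)$ appearing in the Bellman backup~\eqref{eq:lowerbound:qvalue} and its unrestricted counterpart~\eqref{eq:mdp:qvalue} depend only on $(\state,\action)$ and are therefore identical across the two updates, the only asymmetry lies in the bootstrapped max-term. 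Chaining the inductive hypothesis $\underline{Q}^{(k)}(\state',\action') \leq Q^{(k)}(\state',\action')$ (valid for $\action' \in \underline{\actions}_\tau^{\actionsuffix}$) with the set-inclusion above gives
\[
\max_{\action' \in \underline{\actions}_\tau^{\actionsuffix}} \underline{Q}^{(k)}(\state',\action') \;\leq\; \max_{\action' \in \underline{\actions}_\tau^{\actionsuffix}} Q^{(k)}(\state',\action') \;\leq\; \max_{\action' \in \actions^{\checkin}} Q^{(k)}(\state',\action').
\]
Multiplying by the nonnegative factor $\gamma^{\checkin} T^{\checkin}(\state',\action,\state)$, summing over $\state'$, and adding the common reward term preserves the inequality, yielding $\underline{Q}^{(k+1)}(\state,\action) \leq Q^{(k+1)}(\state,\action)$.

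Finally, I would invoke the standard contraction property of the Bellman operator (so both iterates converge to their respective fixed points) and pass to the limit $k\to\infty$, concluding $\underline{Q}^\opt(\state,\action) \leq Q^\opt(\state,\action)$ for all $\state \in \states$ and all $\action \in \underline{\actions}_\tau^{\actionsuffix}$. I do not anticipate a genuine obstacle: this is essentially the textbook monotonicity argument for restricted-action MDPs, and the only point requiring care is verifying that the composed rewards and transitions are identical in the two Bellman operators for actions that lie in the restricted set, which is immediate from Definitions~\ref{defn:t-composition} and~\ref{defn:rew-compositions} since those definitions do not reference the ambient action space.
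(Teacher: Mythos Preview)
Your argument is correct and follows essentially the same route as the paper's proof sketch, which merely notes that \eqref{eq:lowerbound:qvalue} and \eqref{eq:mdp:qvalue} share the same structure with the maximization in the former taken over a subset of actions. Your value-iteration induction simply fills in the details of that observation.
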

\begin{proof}[Proof Sketch]
The claim follows from the observation that \eqref{eq:lowerbound:qvalue} and \eqref{eq:mdp:qvalue} have the same structure, and the maximization in \eqref{eq:lowerbound:qvalue} is on a smaller set of actions.
\end{proof}

The bound in Lemma~\ref{lemma:lb:base} only applies to the Q-values corresponding to actions with suffix $\actionsuffix$. Next, we extend the bound to all actions with a given prefix, and to state values.

\begin{theorem}[Restricting the action set lower-bounds the Q-values of actions sharing the same prefix]
\label{thm:lb:ext}
Consider a \psomdp  $\langle S, A, T, R, \checkin  \rangle$ with Q-values $Q^\opt(\state,\action)$. Also consider the restriction of the \psomdp to action sequences $\underline \actions_\tau^\actionsuffix$ with suffix $\actionsuffix$, with Q-values $\underline Q^\opt(\state,\action)$. Define $Q^\opt(\state,\action_{0:\tau})$ as
\begin{equation}
Q^\opt(\state,\action_{0:\tau})=\max_{\substack{\hat\action\in\actions^\checkin\\[1pt]\text{\rm with }\\[-1pt] \hat\action_{0:\tau}=\action_{0:\tau}}} Q^\opt(\state,\hat\action).
\label{eq:lb:prefix:ext}
\end{equation}

Then
$Q^\opt(\state,\action_{0:\tau})\geq \underline Q^\opt(\state,(\action_{0:\tau}\actionsuffix))$.
\end{theorem}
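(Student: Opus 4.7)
The plan is to chain two elementary inequalities: one arising from the definition of $Q^\opt(\state,\action_{0:\tau})$ as a maximum over a set of full-length actions that contains $(\action_{0:\tau}\actionsuffix)$, and the other being the already-established Lemma~\ref{lemma:lb:base}.

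First I would observe that $(\action_{0:\tau}\actionsuffix)$ is a full-length action sequence of length \checkin whose first $\tau$ entries are exactly $\action_{0:\tau}$. Consequently it is a member of the set $\{\hat\action\in\actions^\checkin \mid \hat\action_{0:\tau}=\action_{0:\tau}\}$ over which the maximum in \eqref{eq:lb:prefix:ext} is taken. Hence, directly from the definition,
\[
Q^\opt(\state,\action_{0:\tau}) \;\geq\; Q^\opt(\state,(\action_{0:\tau}\actionsuffix)).
\]
This is the ``prefix-maximum dominates any completion'' step and requires nothing beyond unpacking the definition.

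Next, since $(\action_{0:\tau}\actionsuffix)\in\underline\actions_\tau^\actionsuffix$ (its suffix is $\actionsuffix$ by construction), Lemma~\ref{lemma:lb:base} applies with $\action = (\action_{0:\tau}\actionsuffix)$ and yields
\[
Q^\opt(\state,(\action_{0:\tau}\actionsuffix)) \;\geq\; \underline Q^\opt(\state,(\action_{0:\tau}\actionsuffix)).
\]
Chaining the two inequalities produces exactly the claim.

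I do not anticipate any real obstacle here: the theorem is essentially a bookkeeping corollary of Lemma~\ref{lemma:lb:base} combined with the observation that a maximum over a set is at least as large as the value at any particular member of that set. The only mildly subtle point is making sure the argument applies for \emph{every} choice of fixed suffix $\actionsuffix$, but since Lemma~\ref{lemma:lb:base} was stated for an arbitrary fixed $\actionsuffix$, no extra quantification is needed. If one wanted to tighten the presentation, the two inequalities could be written as a single display, and a short sentence appended noting that the bound holds uniformly over $\state$ and over any prefix $\action_{0:\tau}\in\actions^\tau$.
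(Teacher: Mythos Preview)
Your proposal is correct and matches the paper's own proof essentially line for line: the paper also invokes Lemma~\ref{lemma:lb:base} to get $\underline Q^\opt(\state,(\action_{0:\tau}\actionsuffix)) \leq Q^\opt(\state,(\action_{0:\tau}\actionsuffix))$ and then observes that $(\action_{0:\tau}\actionsuffix)$ is a feasible point in the maximization defining $Q^\opt(\state,\action_{0:\tau})$. The only cosmetic difference is the order in which the two inequalities are presented.
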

\begin{proof}
According to Lemma~\ref{lemma:lb:base}, $\underline Q^\opt(\state,\action) \leq Q^\opt(\state,\action), \forall \state\in\states,
\action\in\underline\actions_\tau^\actionsuffix$.  In particular, $\forall \state\in\states$,
$\underline Q^\opt(\state,(\action_{0:\tau}\actionsuffix)) \leq Q^\opt(\state,(\action_{0:\tau}\actionsuffix))
\leq \max_{{\hat\action\in\actions^\checkin\text{ with } \hat\action_{0:\tau}= \action_{0:\tau}}} Q^\opt(\state,\hat\action) = Q^\opt(\state,\action_{0:\tau}),$
the last inequality follows because $(\action_{0:\tau}\actionsuffix) \in\actions^\checkin$ with $(\action_{0:\tau}\actionsuffix)_{0:\tau} = \action_{0:\tau}$.
\end{proof}

\begin{lemma}[Restricting the set of admissible actions lower-bounds the state values]
\label{lemma:lb:states}
Consider a \psomdp  $\langle S, A, T, R, \checkin  \rangle$ with Q-values $Q^\opt(\state,\action)$ and state values $U^\opt(\state)$. Also consider the restriction of the \psomdp to action sequences $\underline \actions_\tau^\actionsuffix$, with Q-values $\underline Q^\opt(\state,\action)$. Then, 
\begin{align}
U^\opt(\state)\geq \underline {U}^\opt(\state) = \max_{\action\in\underline \actions_\tau^\actionsuffix} \underline Q^\opt(\state,\action).
\label{eq:lb:prefix:Uvalues}
\end{align}
\end{lemma}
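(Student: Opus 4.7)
The plan is to derive the inequality as a short two-step chain that combines Lemma~\ref{lemma:lb:base} with a standard monotonicity argument for the $\max$ operator. The starting observation is simply that $U^\opt(\state) = \max_{\action \in \actions^\checkin} Q^\opt(\state,\action)$ by Bellman's equation for the composite action process (cf.\ Equation~\eqref{eq:mdp:qvalue}), while $\underline U^\opt(\state) = \max_{\action \in \underline\actions_\tau^\actionsuffix} \underline Q^\opt(\state,\action)$ by the definition in the lemma statement.

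The key step is then to insert an intermediate quantity, namely $\max_{\action\in \underline\actions_\tau^\actionsuffix} Q^\opt(\state,\action)$, and establish the chain
\begin{equation*}
\underline U^\opt(\state) \;=\; \max_{\action\in\underline\actions_\tau^\actionsuffix}\underline Q^\opt(\state,\action) \;\leq\; \max_{\action\in\underline\actions_\tau^\actionsuffix} Q^\opt(\state,\action) \;\leq\; \max_{\action\in\actions^\checkin} Q^\opt(\state,\action) \;=\; U^\opt(\state).
\end{equation*}
The first inequality is a termwise application of Lemma~\ref{lemma:lb:base}: for every $\action\in\underline\actions_\tau^\actionsuffix$ we have $\underline Q^\opt(\state,\action)\leq Q^\opt(\state,\action)$, and taking the max over the common index set $\underline\actions_\tau^\actionsuffix$ preserves this pointwise bound. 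The second inequality is monotonicity of $\max$ under set inclusion, using that $\underline\actions_\tau^\actionsuffix\subseteq\actions^\checkin$ by construction (any sequence with suffix $\actionsuffix$ is itself a length-$\checkin$ action sequence).

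There is essentially no obstacle here; the work was already done in Lemma~\ref{lemma:lb:base}. The only care needed is notational: one must be explicit that the two $Q$-functions live on possibly overlapping domains but are distinct objects, so that the first inequality invokes Lemma~\ref{lemma:lb:base} per action rather than confusing the two maxima. The lemma could alternatively be viewed as the natural ``state-value'' companion to Theorem~\ref{thm:lb:ext}, which yields the same bound on $Q$-values restricted to a fixed prefix; here we simply maximize over prefixes as well.
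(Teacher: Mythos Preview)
Your proof is correct and essentially mirrors the paper's reasoning. The paper cites Theorem~\ref{thm:lb:ext} together with the definition $U^\opt(\state)=\max_{\action\in\actions^\checkin} Q^\opt(\state,\action)$, whereas you invoke Lemma~\ref{lemma:lb:base} directly; since Theorem~\ref{thm:lb:ext} is itself derived from Lemma~\ref{lemma:lb:base} via the same set-inclusion monotonicity step you use, the two routes are equivalent and your version is arguably the more direct of the two.
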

\begin{proof}
The proof follows from Theorem \ref{thm:lb:ext} and from the definition of $U^\opt(\state)=\max_{\action\in\actions^\checkin} Q^\opt(\state,\action)$.
\end{proof}

\subsection{Dilatory Process as a lower bound}

A \psomdp is termed a \emph{non-drift} \psomdp, if it admits \nop actions  that leave the agent in the same state with probability one, and provides zero reward. We shall denote such actions by ``$\anop$''.
For a non-drift \psomdp, the natural choice for the suffix $\actionsuffix$ is a sequence of \nop actions $\actionsuffix=(\anop, \ldots, \anop)$. This offers an intuitive interpretation of the lower bounds in Theorem \ref{thm:lb:ext} and Lemma \ref{lemma:lb:states} as the outcome of a \emph{dilatory process} where the agent follows the optimal policy of length $\tau$, and then stops taking actions until it receives the information delivered by the next check-in.

The interpretation of the lower bounds as a dilatory process is of interest because it provides a connection between the upper bounds in Section \ref{sec:ub:bonus} (which have the agent replanning with new information after $\tau$ steps) and the lower bounds in this section (which, in the non-drift case, assume the agent pauses after $\tau$ steps to wait for new information). However, note that the optimal policy for the upper bound may differ from the optimal policy for the lower bound due to the discounting factor.

The use of \nop actions for the action suffix $\actionsuffix$ also offers a computational advantage: since \nop actions result in no transition state and no reward, the transition and reward functions can be computed as $T^\checkin(\state',\action,\state)=T^\tau(\state',\action,\state)$ and $R^{\checkin,\gamma}(\state,\action)=R^{\tau,\gamma}(\state,\action), \forall \action \in \underline {\actions}_\tau^\actionsuffix,$ where $\actionsuffix=(\anop, \ldots, \anop)$.
Since the cost of computing $T^\checkin$ and $R^\checkin$ scales exponentially with \checkin, this can result in significant computational savings.

\section{A branch-and-bound algorithm}
\label{sec:bnb}

We are now in a position to use the upper and lower bounds described in the previous sections to efficiently solve \psomdps. Specifically, we propose a \emph{branch-and-bound} algorithm that builds a sequence of increasingly tight upper and lower bounds, and uses the bounds to prune suboptimal actions. The proposed approach is described in Algorithm~\ref{alg:branch-and-bound}.
The algorithm iteratively builds upper and lower bounds for the Q-values of action prefixes for each state, and uses the bounds to discard actions whose prefix's upper bound is smaller than another action prefix's lower bound.
The key insight is to keep track of non-dominated actions for each state through the set $\algoactions{\state}$; whenever the upper bound $\overline Q^\opt (\state,\action_{0:\tau})$ for a given action prefix $\action_{0:\tau}$ (computed by using the extended Q-values presented in Section \ref{sec:ub:extended-Q-values}) is lower than the lower bound $\underline U^\opt(\state)$, as obtained via Equations \eqref{eq:lowerbound:qvalue} and \eqref{eq:lb:prefix:Uvalues}, the actions with the prefix $\action_{0:\tau}$ are discarded for that state.

\newcommand{\comm}[1]{\Comment{\textcolor{gray}{\scriptsize #1}}}
\begin{algorithm}
\small{
\begin{algorithmic}[1]
\For{$\state\in\states$}
\State $\algoactions{\state}\gets\actions$ \comm{Keep track of non-dominated actions}
\EndFor
\For{$\tau\in\{1,\ldots,\checkin\}$} \comm{Consider increasingly long action prefixes}
\State $T^{\tau}(\state',\action,\state)\gets$ Extend $T^{\tau-1}(\state',\action,\state)$  \comm{Update transitions}
\State $R^{\tau,\gamma}(\state,\action)\gets$ Extend $R^{\tau-1,\gamma}(\state,\action)$ \comm{Update  rewards} 
\If{ $\tau$ is a divisor of \checkin}
\State $\overline Q^\opt (\state,\action_{0:\tau}) \gets $ Solve \eqref{eq:bounds:ub:qvalues:BL:S} with $B=\{\tau\}$\hspace*{7ex}\phantom{.} \hspace*{8ex} but with the action prefix set restricted to be\hspace*{3ex}\phantom{.} \hspace*{8.5ex}$\{\action_{0:\tau}: \action \in \algoactions{\state}\}$ \comm{Update the upper bound}
\Else 
\State $\overline Q^\opt (\state,\action_{0:\tau}) \gets \overline Q^\opt (\state,\action_{0:\tau-1})$ \comm{Adapt previous upper bound}
\EndIf
\State $\underline U^\opt(\state)\gets$ Solve \eqref{eq:lowerbound:qvalue}, \eqref{eq:lb:prefix:Uvalues} with action prefix set\hspace*{5ex}\phantom{.} \hspace*{5ex}restricted to $\{\action_{0:\tau}: \action \in \algoactions{\state}\}$ \comm{Update the lower bound}
\For{$\state\in\states, \action\in\algoactions{\state}$}
\If{$\overline Q^\opt (\state,\action_{0:\tau}) \leq  \underline U^\opt(\state)$}\comm{Prune actions}
\State Remove all actions with prefix $\action_{0:\tau}$ from $\algoactions{\state}$ 
\EndIf
\EndFor
\EndFor
\State $Q^\opt(\state,\action), \pi^\opt(\state) \gets$ Solve \eqref{eq:mdp:qvalue}, \eqref{eq:mdp:policy} with only $\algoactions{\state}$ actions 
\end{algorithmic}
}
\caption{Branch-and-bound algorithm for \psomdps}
\label{alg:branch-and-bound}
\end{algorithm}

Compared to naively solving the \mdp version of the \psomdp with Equations \eqref{eq:mdp:qvalue} and \eqref{eq:mdp:policy}, Algorithm \ref{alg:branch-and-bound} requires solving up to $2(\checkin-1)$ additional \mdps with action sets of size upper-bounded by $A, A^2, \ldots, A^{\checkin-1}$. However, the pruning procedure can greatly reduce the size of the action set, which can result in significantly reduced computation times in practical applications, as shown next.

\section{Numerical experiments}
\label{sec:num-ex}

We assess the performance of the proposed branch-and-bound algorithm on robot navigation problems.
We consider two grid-world \psomdps, shown in Figure~\ref{fig:experiments:grid-world}. An agent must navigate to rewarding states  (shown in cyan) while avoiding obstacles (shown in purple). The agent's navigation is imperfect: when trying to drive in a given direction, the agent remains in place with 5\% probability, and drifts left or right of the desired direction with 7.5\% probability  each. We compare the time required to solve the \psomdps with Algorithm \ref{alg:branch-and-bound} with a naive approach where we formulate the \psomdp as a \mdp\gobble{ according to Definition \ref{defn:composite-action}}, and then solve it via value iteration. Figure \ref{fig:experiments} shows the time required to formulate and solve the problem with both approaches, for both problems. Due to memory limitations, the larger grid is only solved for $\checkin\leq 8$.
\begin{figure}[h]
\jvspace{-1em}
\begin{subfigure}[b]{.48\linewidth}
\centering
\includegraphics[angle=-90,totalheight=.5\linewidth]{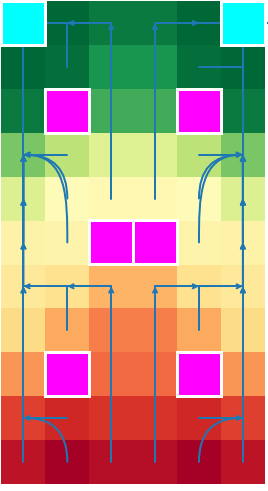}
\caption{$6\times 11$ grid}
\end{subfigure}
\begin{subfigure}[b]{.48\linewidth}
\centering
\includegraphics[angle=-90,totalheight=.33\linewidth]{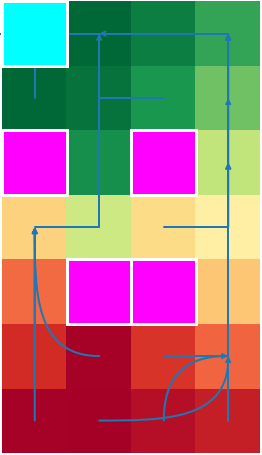}
\caption{$4\times 7$ grid}
\end{subfigure}
\caption{Grid world \psomdp problem, state values, and optimal policy. Rewarding terminal states are cyan; obstacles are purple.}
\label{fig:experiments:grid-world}
\vspace*{-1em}
\end{figure}

\begin{figure}[h]
\vspace{-1em}
\begin{subfigure}[t]{.1575\linewidth}
\vskip 0pt
\includegraphics[height=1.5\linewidth]{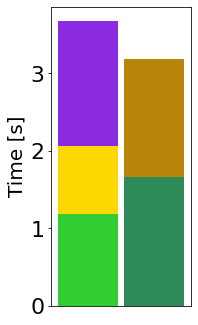}
\caption{\checkin=4}
\label{fig:experiments:medium:4}
\end{subfigure}
\begin{subfigure}[t]{.1575\linewidth}
\vskip 0pt
\includegraphics[height=1.5\linewidth]{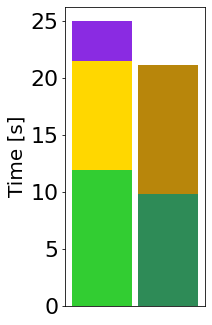}
\caption{\checkin=5}
\label{fig:experiments:medium:5}
\end{subfigure}
\begin{subfigure}[t]{.1575\linewidth}
\vskip 0pt
\includegraphics[height=1.5\linewidth]{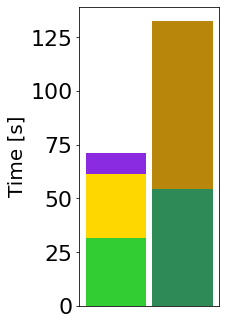}
\caption{\checkin=6}
\label{fig:experiments:medium:6}
\end{subfigure}
\begin{subfigure}[t]{.1575\linewidth}
\vskip 0pt
\includegraphics[height=1.5\linewidth]{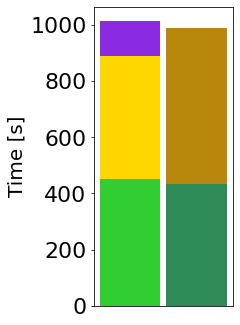}
\caption{\checkin=7}
\label{fig:experiments:medium:7}
\end{subfigure}
\begin{subfigure}[t]{.1575\linewidth}
\vskip 0pt
\includegraphics[height=1.5\linewidth]{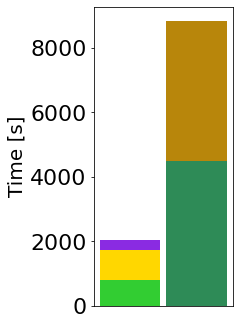}
\caption{\checkin=8}
\label{fig:experiments:medium:8}
\end{subfigure}
\begin{subfigure}[t]{.1575\linewidth}
\vskip 0pt
\includegraphics[width=\linewidth]{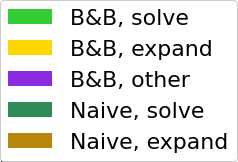}
\end{subfigure}
\begin{subfigure}[t]{.1575\linewidth}
\vskip 0pt
\includegraphics[height=1.5\linewidth]{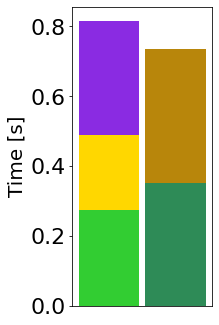}
\caption{\checkin=4}
\label{fig:experiments:small:4}
\end{subfigure}
\begin{subfigure}[t]{.1575\linewidth}
\vskip 0pt
\includegraphics[height=1.5\linewidth]{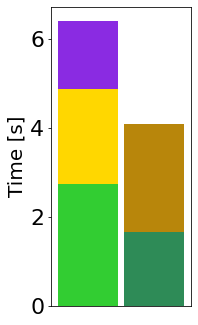}
\caption{\checkin=5}
\label{fig:experiments:small:5}
\end{subfigure}
\begin{subfigure}[t]{.1575\linewidth}
\vskip 0pt
\includegraphics[height=1.5\linewidth]{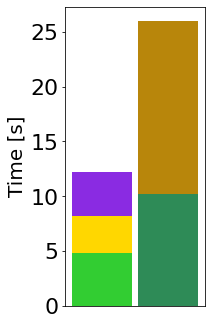}
\caption{\checkin=6}
\label{fig:experiments:small:6}
\end{subfigure}
\begin{subfigure}[t]{.1575\linewidth}
\vskip 0pt
\includegraphics[height=1.5\linewidth]{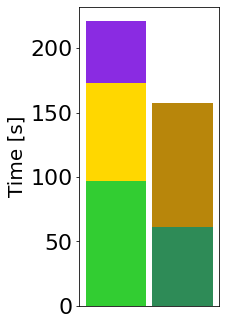}
\caption{\checkin=7}
\label{fig:experiments:small:7}
\end{subfigure}
\begin{subfigure}[t]{.1575\linewidth}
\vskip 0pt
\includegraphics[height=1.5\linewidth]{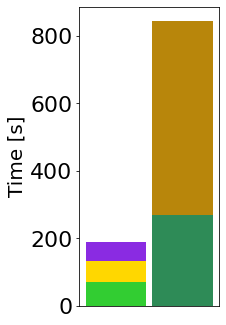}
\caption{\checkin=8}
\label{fig:experiments:small:8}
\end{subfigure}
\begin{subfigure}[t]{.1575\linewidth}
\vskip 0pt
\includegraphics[height=1.5\linewidth]{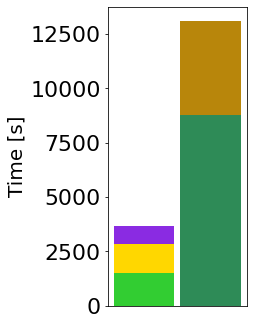}
\caption{\checkin=9}
\label{fig:experiments:small:9}
\end{subfigure}
\caption{Time required to formulate and solve a navigation \psomdp problem. Top (\subref{fig:experiments:medium:4}-\subref{fig:experiments:medium:8}): $6\times 11$ grid. Bottom (\subref{fig:experiments:small:4}-\subref{fig:experiments:small:9}): $4 \times 7$ grid.}
\label{fig:experiments}
\end{figure}

The proposed approach significantly outperforms the naive approach for
$\checkin=6$, $\checkin=8$, and (for the smaller grid) $\checkin=9$, offering a twofold to fourfold reduction in
computation time---an encouraging result that points to the branch-and-bound
approach, informed by upper and lower bounds, as a highly promising technique to make
\psomdps with large check-in periods tractable. We note that, for $\checkin=5$
and $\checkin=7$, the performance of the branch-and-bound approach is on par
or slightly worse compared to the naive approach.
This is not unexpected:  the upper bound is only
updated for $\ell$ that are divisors of $\checkin$, which results in modest
bounding and pruning when the check-in period is a prime number.
This is not a fundamental limitation of the algorithmic approach,
but rather a byproduct the simple technique used to select the
 set $B$ in Algorithm \ref{alg:branch-and-bound}.
Extending the approach to accommodate generic sets of check-in
times $B$, and devising techniques to select sets $B$ that result in tight upper bounds for general check-in periods \checkin,
 are critical directions for future research.

\section{Conclusions}
\label{sec:concl}

Planning under uncertainty\,---the crucial problem faced by robots---\,is computationally
intractable to solve in the form of completely general \pomdps.
One approach to handle this impasse is to add
simplifying assumptions, or to impose constraints, 
that afford opportunities for efficient specialized solution
methods. This is, broadly, the approach employed in the 
present paper. We have identified and examined a novel class of decision-making
problems in between \mdps and \pomdps, the former
not accounting for observation uncertainty, while the latter
being generally computationally intractable. 
The class of \psomdps model situations where the state is 
only observed periodically. 
We establish a collection of bounds for these problems by, quite intuitively,
considering cases that vary when state information is made available to
the agent. These bounds are then turned to gains in computational efficiency
via a branch-and-bound algorithm.  The paper also uncovers some intriguing nuances.
For instance, that receiving data more frequently is not always better.
Also, knowledge of \emph{when} uncertainty will be quashed can be exploited and thus be understood
to have specific value.

\section*{Acknowledgements}
Part of this work was carried out at the Jet Propulsion Laboratory (JPL), California Institute of Technology, 
 under a contract with the National Aeronautics and Space Administration (80NM0018D0004).
The work at TAMU was supported in part by NASA/Jet Propulsion Lab R\&TD Innovative Spontaneous Concept Award \#1652187, and in part by NSF Award IIS-2034097.
\copyright 2022. All rights reserved.

\bibliographystyle{IEEEtranS}
\bibliography{refs}

\end{document}